\documentclass{article} % For LaTeX2e
\usepackage{nips13submit_e,times}
\usepackage{hyperref}
\usepackage{url}
\usepackage{comment}
\usepackage{amsmath}
\usepackage{amssymb}
\usepackage{amsthm}
\usepackage{graphicx}

\newtheorem{theorem}{Theorem}
\newtheorem{lemma}{Lemma}

\title{Contextually Supervised Source Separation with Application to
  Energy Disaggregation}

\author{
Matt Wytock and J. Zico Kolter\\
School of Computer Science\\
Carnegie Mellon University\\
Pittsburgh, PA 15213 \\
\texttt{mwytock@cs.cmu.edu} \\
}
%% \And
%% Coauthor \\
%% Affiliation \\
%% Address \\
%% \texttt{email} \\
%% }
%% \AND
%% Coauthor \\
%% Affiliation \\
%% Address \\
%% \texttt{email} \\
%% \And
%% Coauthor \\
%% Affiliation \\
%% Address \\
%% \texttt{email} \\
%% \And
%% Coauthor \\
%% Affiliation \\
%% Address \\
%% \texttt{email} \\
%% (if needed)\\
%% }

% The \author macro works with any number of authors. There are two commands
% used to separate the names and addresses of multiple authors: \And and \AND.
%
% Using \And between authors leaves it to \LaTeX{} to determine where to break
% the lines. Using \AND forces a linebreak at that point. So, if \LaTeX{}
% puts 3 of 4 authors names on the first line, and the last on the second
% line, try using \AND instead of \And before the third author name.

\DeclareMathOperator*{\minimize}{minimize}
\DeclareMathOperator*{\subjectto}{subject\;to}
\DeclareMathOperator*{\tr}{tr}

\nipsfinalcopy % Uncomment for camera-ready version

\begin{document}

\maketitle

\begin{abstract}
We propose a new framework for single-channel source separation that lies
between the fully supervised and unsupervised setting. Instead of
supervision, we provide input features
for each source signal and use convex methods to estimate the
correlations between these features and the unobserved signal decomposition. We
analyze the case of $\ell_2$ loss theoretically and show that recovery of
the signal components depends only on cross-correlation between features
for different signals, not on correlations between features for the same
signal. Contextually supervised source separation is a natural fit for
domains with large amounts of data but no explicit supervision; our motivating
application is energy disaggregation of hourly smart meter data (the separation
of whole-home
power signals into different energy uses). Here we apply contextual supervision to disaggregate the energy
usage of thousands homes over four years, a significantly larger scale than
previously published efforts, and demonstrate on synthetic data
that our method outperforms the unsupervised approach.
\end{abstract}

\section{Introduction}

We consider the \emph{single-channel source separation} problem, in
which we wish to separate a single aggregate signal into mixture of
unobserved component signals. Traditionally, this problem has been
approached in two ways: the \emph{supervised} setting
\cite{kolter2010energy,roweis2001one,schmidt2006single},
where we have access to training data with the true signal
separations and the \emph{unsupervised} (or ``blind'')
setting \cite{blumensath2005shift, davies2007source, lewicki2000learning,
  schmidt2006nonnegative}, where we have only the
aggregate signal.  However,
both settings have potential drawbacks: for many problems, including
energy disaggregation---which looks to separate individual energy uses
from a whole-home power signal \cite{hart1992nonintrusive}---it can
be difficult to obtain training data with the true separated signals
needed for the supervised setting; in contrast, the unsupervised
setting is an ill-defined problem with arbitrarily many solutions, and
thus algorithms are highly
task-dependent.

In this work, we propose an alternative approach that lies between
these two extremes. We propose a framework of \emph{contextual
  supervision}, whereby along with the input signal to be separated, we
provide contextual features correlated with the
unobserved component signals. In practice, we find that this is often
much easier than providing a fully supervised training set; yet it
also allows for a well-defined problem, unlike the unsupervised
setting. The approach is a natural fit for energy disaggregation, since
we have strong correlations between energy usage and easily observed
context---air conditioning spikes in hot summer months,
lighting increases when there is a lack of sunlight, etc. We
formulate our model directly as an optimization problem in which we jointly
estimate these correlations along with the most likely source
separation. Theoretically, we show that when the contextual features are
relatively uncorrelated between different groups, we can recover the
correct separation with high probability. We demonstrate that our
model recovers the correct separation on synthetic examples resembling
the energy disaggregation task and apply the
method to the task of separating sources of energy usage for thousands of homes
over 4 years, a scale much larger than previously published efforts.  The main
contributions of this
paper are 1) the proposed contextually supervised setting and the
optimization formulation; 2) the theoretical analysis showing that
accurate separation only requires linear independence between features
for different signals; and 3) the application of this approach to the
problem of energy disaggregation, a task with significant potential to
help inform consumers about their energy behavior, thereby increasing
efficiency.

%% The main contributions of this paper are: 1) we formulate the
%% contextually supervised source separation setting, and describe an
%% optimization formulation for solving these problems; 2) we analyze the
%% model theoretically, for the special case of $\ell_2$ loss, and show
%% that accurate separation of the sources requires just linear
%% independence of features between different signals; and 3) we evaluate
%% the method on both synthetic data and electrical energy
%% disaggregation---in the synthetic case, we show the method outperforms
%% single-channel blind source separation techniques such as sparse
%% coding, and for the energy disaggregation application, we show that
%% the method is able to reasonably separate many types of consumption
%% from extremely low-frequency smart meter data, representing a
%% significant advance on this task.

\section{Related work}

As mentioned above, work in single-channel source separation
has been separated along the lines of supervised and
unsupervised algorithms, although several algorithms can be
applied to either setting.  A common strategy is to
separate  the observed aggregate signal into a linear combination
of several \emph{bases}, where different bases correspond to different
components of the signal; algorithms such as Probabilistic Latent
Component Analysis (PLCA) \cite{smaragdis2006probabilistic}, sparse coding
\cite{olshausen1997sparse}, and factorial hidden Markov models (FHMMs)
\cite{ghahramani1997factorial} all fall within this category, with
the differences concerning 1) how bases are represented and assigned
to different signal components and 2) how the algorithm infers the
activation of the different bases given the aggregate signal.  For
example, PLCA typically uses pre-defined basis functions (commonly
Fourier or Wavelet bases), with a probabilistic model for how sources
generate different bases; sparse coding learns bases tuned to data while
encouraging sparse activations;
and FHMMs use hidden Markov models to represent each source.  In the
supervised setting, one typically uses the individual signals to learn
parameters for each set of bases (e.g., PLCA will learn which bases
are typical for each signal), whereas unsupervised methods learn
through an EM-like procedure or by maximizing some separation
criteria for the learned bases.  The method we propose here is
conceptually similar, but the nature of these bases is rather different: instead
of fixed bases with changing activations, we require features that
effectively generate time-varying bases and learn
activations that are constant over time.

Orthogonal to this research, there has also been a great deal of work in
\emph{multi-channel} blind source separation problems, where we observe multiple
mixings of the same sources (typically, as many mixings as there are signals)
rather than in isolation.
These methods can exploit significantly more structure and algorithms like Independent Component Analysis
\cite{comon1994independent,bell1995information}
can separate signals with virtually no supervised information.
However, when applied to the single-channel problem (when this is even
possible), they typically perform substantially worse than
methods which exploit structure in the problem, such as those described above.

From the applied point of view, algorithms for energy disaggregation
have received growing interest in recently years
\cite{kolter2010energy,zeifman.11,kolter2012approximate,parson2012non}.
This is an important task since
many studies have shown that consumers naturally adopt energy
conserving behaviors when presented with a breakdown of their energy
usage \cite{darby.06,neenan.09,ehrhardt2010advanced}. Algorithmic
approaches to disaggregation are appealing as they allow for these
types of breakdowns, but existing disaggregation approaches virtually all use
high-frequency sampling of the whole-building power signal (e.g. per second)
requiring the installation of custom monitoring hardware for data collection.  In contrast,
this work focuses on disaggregation using data from ``smart meters'',
communication-enabled power meters that are currently installed in
more than 32 million homes \cite{iee.12}, but are limited to recording
usage at low frequencies (every 15 minutes or hour), leading to a substantially
different set of challenges. Smart meters are relatively new and deployment is
ongoing, but due to the large amount of data available now and in the near
future, successful disaggregation has the potential to have a profound impact on
energy efficiency.

\section{Optimization Formulation}

We begin by formulating the optimization problem for contextual source
separation. Formally, we assume there is some unknown matrix of $k$
component signals
\begin{equation}
Y \in \mathbb{R}^{T \times k} = \left [ \begin{array}{cccc}
\mid & \mid & & \mid \\ y_1 & y_2 & \cdots & y_k \\ \mid & \mid & &
\mid \end{array} \right ]
\end{equation}
from which we observe the sum $\bar{y} = \sum_{i=1}^k y_i$.  For
example, in our disaggregation setting, $y_i \in \mathbb{R}^T$ could denote  a
power trace (with $T$ total readings) for a single type of
appliance, such as the air conditioning, lighting, or electronics, and
$\bar{y}$ denotes the sum of all these power signals, which we observe
from a home's power meter.

In our proposed model, we represent each individual component signal
$y_i$ as a linear function of some component-specific bases $X_i \in
\mathbb{R}^{T \times n_i}$
\begin{equation}
y_i \approx X_i \theta_i
\end{equation}
where $\theta_i \in \mathbb{R}^{n_i}$ are the signal's
coefficients. The formal objective of our algorithm is: given the
aggregate signal $\bar{y}$ and the component features $X_i$,
$i=1,\ldots,k$, estimate both the parameters $\theta_i$ and the
unknown source components $y_i$.  We cast this as an optimization problem
\begin{equation}
\label{eq-opt}
\begin{split}
\minimize_{Y,\theta} \;\; & \sum_{i=1}^k \left \{ \ell_i(y_i, X_i\theta_i) +
g_i(y_i) + h_i(\theta_i) \right \} \\
\subjectto \;\; & \sum_{i=1}^k y_i = \bar y \\
\end{split}
\end{equation}
where $\ell_i : \mathbb{R}^{T} \times \mathbb{R}^{T} \rightarrow
\mathbb{R}$ is a loss function penalizing differences between the
$i$th reconstructed signal and its linear representation; $g_i$ is a
regularization term encoding the ``likely'' form of the signal $y_i$,
independent of the features; and $h_i$ is a regularization penalty on
$\theta_i$. Choosing $\ell_i$, $g_i$ and $h_i$ to be convex functions
results in a convex optimization problem.

A natural choice of loss function $\ell_i$ is a norm
penalizing the difference between the reconstructed signal and its features
$\|y_i - X_i \theta_i\|$, but since our formulation enables loss functions that
depend simultaneously on all $T$ values of the signal, we allow for more complex
choices as well.  For example
in the energy disaggregation problem, air conditioning is correlated with
high temperature but does not respond to outside temperature changes instantaneously;
thermal mass and the varying occupancy in buildings often results in air
conditioning usage that correlates with high temperature over some
window (for instance, if no one is in a room during a period of high
temperature, we may not use electricity then, but need to ``make up'' for this
later when someone does enter the room).  Thus, the loss function
\begin{equation}
\ell_i(y_i, X_i \theta_i) = \|(y_i - X_i \theta_i)(I \otimes
1^T)\|^2_2
\end{equation}
which penalizes the aggregate difference of $y_i$ and $X_i\theta_i$ over a
sliding window, can be used to capture such dynamics. In many settings, it
may also make sense to use $\ell_1$ or $\ell_\infty$ rather than $\ell_2$ loss,
depending on the nature of the source signal.

Likewise, since the objective term $g_i$ depends on all $T$ values of $y_i$, we
can use it to encode the likely dynamics of the source signal independent of
$X_i\theta_i$. For air conditioning and other single appliance types, we expect
transitions between on/off
states which we can encode by penalizing the $\ell_1$ norm of $Dy_i$ where
$D$ is the linear difference operator subtracting $(y_i)_{t-1} - (y_i)_t$. For
other types of energy consumption, for example groups of many electronic
appliances, we expect the signal to have smoother
dynamics and thus $\ell_2$ loss is more appropriate.  Finally, we also include
$h_i$ for statistical regularization purposes---but for problems where $T \gg
n_i$, such as the ones we consider in our experiments, the choice of $h_i$ is
less important.

\begin{comment}
we find it most natural not
to assume a particular distribution for $w_i$ but to instead capture
this correspondence through a
loss function, $f_i(y_i, X_i\theta_i)$; common choices
being a norm $\|y_i - X_i\theta_i\|$, or more complex alternatives such as
$\|(x - y)(I \otimes 1^T)\|$  which allows some amount of ``shift'' in the
correspondence between the signal and the input features over time. We
combine these functions in an optimization problem that jointly estimates both
the model parameters and the source signals
 where the additional term in the objective, $g_i(y_i)$, encodes the model of
 how the source evolves over time. For example, the $\ell_2$ norm of difference
operator, $\|Dy\|_2$, smoothes the signal over time while the difference operator
with the $\ell_1$ norm tends to encourage a piecewise constant signal. The
optimization framework makes it straightforward to include additional
constraints such as the $y_i \ge 0$ which satisfies the requirement
that energy usage be
nonnegative. It would similarly be straightforward to add additional
penalty terms on $\theta_i$ for statistical regularization, but the energy
disaggregation task that we consider in this work exploits the fact that $T \gg
n$ and thus for this application, regularization is unnecessary.
\end{comment}

\section{Theoretical analysis}

Next, we consider the ability of our model to recover the true source
signals as $T$ grows while $k$ and $n_i$ remain fixed. For the purposes
of this section only, we restrict our attention to the choice of
$\ell_2$ loss, no $g_i$ or $h_i$ terms, and
Gaussian noise (the extension to the sub-Gaussian case is
straightforward).  We show that under this specialization of
the model, the optimization problem recovers the underlying signals
at a rate dependent on the linear independence between blocks of input features
$X_i$. In practice, the choice of $\ell_i$, $g_i$ and
$h_i$ is problem-specific, but $\ell_2$ loss is a reasonable default
and while simplifying the theoretical analysis dramatically, captures
the essential behavior of the model in the large $T$ regime.

Formally, for this section we assume the source signals have Gaussian noise
\begin{equation}
\label{eq-model}
y_i = X_i\theta^\star_i + w_i
\end{equation}
for some $\theta^\star_i \in \mathbb{R}^{n_i}$ and $w_i \sim
\mathcal{N}(0, \sigma_i^2 I)$.  Under the choice of $\ell_2$ loss, our
optimization problem simplifies to
\begin{equation}
\begin{split}
\minimize_{Y,\theta} \;\; &\|Y1 - X\theta\|_2^2 \\
\subjectto \;\; & Y1 = \bar y
\end{split}
\end{equation}
where $1 \in \mathbb{R}^k$ is the all-ones vector, $\theta \in
\mathbb{R}^{n}$ is a concatenation of all the $\theta_i$'s, $X \in
\mathbb{R}^{T \times n}$ is a concatenation of all the
$X_i$'s  and $n = \sum_{i=1}^k n_i$ is the total number of features. In
particular, estimation of $\theta$ is equivalent to
least-squares
\begin{equation}
\label{eq-theta-estimate}
\hat{\theta} \in \mathbb{R}^{n} = \arg \min_{\theta} \|\bar y - X
\theta\|_2^2 = (X^T X)^{-1} X^T{\bar y}.
\end{equation}
Since each $y_i$ has it's own noise term, we can never expect to recover $y_i$
exactly, but we can recover the true $\theta^\star$ with analysis that is the same
as for standard linear regression. However, in the context of source separation,
we are interested in the recovery of the ``noiseless'' $y_i$, $X_i
\theta^\star_i$,  and thus in our analysis we consider how the root mean
squared error
\begin{equation}
\label{eq-err}
\mathrm{RMSE}(X_i\hat{\theta}_i) = \sqrt{\frac{1}{T} \left\|X_i
  \hat{\theta}_i - X_i \theta_i^\star \right \|_2^2}
\end{equation}
vanishes for large $T$; indeed, a key feature of our analysis is to
show that we may recover the underlying signals faster than $\theta^\star_i$.

\begin{theorem}
\label{theorem-bounds}
Given data generated by the model \eqref{eq-model}, and estimating
$\hat{\theta}$ via \eqref{eq-theta-estimate}, we have that
\begin{equation}
\label{eq-expected-value}
\mathbf{E}\left[\|X_i \hat{\theta}_i - X_i \theta^\star\|_2^2 \right ]
= \sigma^2 \tr X_i^T X_i (X^T X)^{-1}_{ii} \leq \sigma^2 n_i
\rho_i
\end{equation}
where $\sigma^2 = \sum_{i=1}^k \sigma_i^2$ and $\rho_i =
\lambda_{\max}(X_i^T X_i (X^T X)^{-1}_{ii})$.  Furthermore, for $\delta
\leq 0.1$, with probability greater than $1 - \delta$
\begin{equation}
\label{eq-sample-complexity}
\mathrm{RMSE}(X_i\hat{\theta}_i) \leq \sqrt{\frac{4 \sigma^2 n_i
    \rho_i \log (1/\delta)}{T}}.
\end{equation}
\end{theorem}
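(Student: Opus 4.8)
The plan is to reduce the statement to two standard ingredients: the closed form of the least-squares residual under Gaussian noise, and a tail bound for a quadratic form in a Gaussian vector. First I would sum the model \eqref{eq-model} over $i$ to get $\bar y = X\theta^\star + w$ with $w = \sum_{i=1}^k w_i \sim \mathcal N(0,\sigma^2 I)$ and $\sigma^2 = \sum_i \sigma_i^2$. Substituting into \eqref{eq-theta-estimate} gives $\hat\theta - \theta^\star = (X^TX)^{-1}X^T w =: z$, a zero-mean Gaussian with covariance $\sigma^2 (X^TX)^{-1}$; restricting to the block of coordinates belonging to source $i$, $\hat\theta_i - \theta^\star_i = z_i \sim \mathcal N(0, \sigma^2 (X^TX)^{-1}_{ii})$, so the error of interest is the quadratic form $\|X_i\hat\theta_i - X_i\theta^\star_i\|_2^2 = z_i^T X_i^T X_i z_i$.

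For the expectation in \eqref{eq-expected-value} I would apply the trace trick, $\mathbf E[z_i^T X_i^T X_i z_i] = \tr(X_i^T X_i\, \mathbf E[z_i z_i^T]) = \sigma^2 \tr X_i^T X_i (X^TX)^{-1}_{ii}$, which is exactly the claimed equality. For the inequality, the key observation is that $M := \Sigma_i^{1/2} X_i^T X_i \Sigma_i^{1/2}$, with $\Sigma_i := (X^TX)^{-1}_{ii} \succ 0$ a principal block of a positive definite matrix, is positive semidefinite and has the same eigenvalues as $X_i^T X_i (X^TX)^{-1}_{ii}$; hence that product has nonnegative real spectrum, its trace is at most $n_i$ times its largest eigenvalue $\rho_i$, and $\mathbf E[\,\cdot\,] \le \sigma^2 n_i \rho_i$.

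For \eqref{eq-sample-complexity} I would whiten the Gaussian: write $z_i = \sigma \Sigma_i^{1/2} g$ with $g \sim \mathcal N(0, I_{n_i})$, so $\|X_i\hat\theta_i - X_i\theta^\star_i\|_2^2 = \sigma^2 g^T M g$, and diagonalizing $M$ expresses $g^T M g$ as a weighted sum of independent $\chi^2_1$ variables with weights the eigenvalues of $M$. I would then invoke the Laurent--Massart tail bound, which gives, with probability at least $1-\delta$, $g^T M g \le \tr M + 2\sqrt{\tr M^2}\,\sqrt{\log(1/\delta)} + 2\lambda_{\max}(M)\log(1/\delta)$, and bound the right side using $\tr M \le n_i\rho_i$, $\tr M^2 \le \rho_i\tr M \le n_i\rho_i^2$, and $\lambda_{\max}(M) = \rho_i$. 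What remains is pure bookkeeping: for $\delta \le 0.1$ one has $\log(1/\delta) \ge \log 10 > 1.87$, which is precisely enough to absorb $n_i\rho_i + 2\rho_i\sqrt{n_i\log(1/\delta)} + 2\rho_i\log(1/\delta)$ into $4 n_i \rho_i \log(1/\delta)$; dividing by $T$ and taking a square root yields the stated RMSE bound.

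The one genuinely delicate step is the concentration inequality: one needs a sharp upper tail for the quadratic form (equivalently, a weighted $\chi^2$) in which the ``variance'' term $\sqrt{\tr M^2}$ and the ``operator norm'' term $\lambda_{\max}(M)$ appear separately. A crude sub-Gaussian bound would be too lossy and, in particular, would not reveal why the reconstruction $X_i\hat\theta_i$ can concentrate faster than $\hat\theta_i$ itself --- the improvement comes precisely from $\tr M$ and $\tr M^2$ being controlled by $n_i\rho_i$ rather than by $n_i$ alone. The remaining pieces (the reduction to least squares, the trace identity, the eigenvalue bound, and the final constant-chasing that uses $\delta \le 0.1$) are routine.
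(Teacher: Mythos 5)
Your proposal is correct and follows the same skeleton as the paper's proof: reduce to least squares on the aggregate $\bar y = X\theta^\star + w$ with $w = \sum_i w_i \sim \mathcal N(0,\sigma^2 I)$, observe that the block error $X_i\hat\theta_i - X_i\theta_i^\star$ is zero-mean Gaussian with covariance $\sigma^2 X_i (X^TX)^{-1}_{ii}X_i^T$, get the expectation by the trace identity, and finish with a tail bound for the resulting weighted-$\chi^2$ quadratic form. The only genuine divergence is in that last step. The paper derives its own Chernoff bound (Lemma \ref{lemma-tail}): it computes the moment generating function $|I - 2\alpha\Sigma|^{-1/2}$, lower-bounds the determinant by $(1-2\alpha\lambda)^n$ (so it effectively dominates the quadratic form by $\rho_i\sigma^2\chi^2_{n_i}$), optimizes $\alpha$, and then inverts the resulting transcendental equation via the Lambert $W$ function, finally using $-W(-\delta^{2/n}/e) \le 4\log(1/\delta)$ for $\delta \le 0.1$. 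You instead cite Laurent--Massart and absorb $n_i\rho_i + 2\rho_i\sqrt{n_i\log(1/\delta)} + 2\rho_i\log(1/\delta)$ into $4n_i\rho_i\log(1/\delta)$ by direct constant-chasing (which does check out: by AM--GM the left side is at most $2n_i + 3\log(1/\delta)$ times $\rho_i$, and this is below $4n_i\log(1/\delta)\rho_i$ once $\log(1/\delta) > 2$). The two routes are essentially equivalent in strength here since you coarsen $\tr M$ and $\tr M^2$ to $n_i\rho_i$ and $n_i\rho_i^2$ anyway; your version is a bit more off-the-shelf and makes the role of $\delta \le 0.1$ transparent, while the paper's Lambert-$W$ form is, as the authors note, somewhat tighter if one does not replace $W$ by the $4\log(1/\delta)$ bound. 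Both correctly identify that the gain over estimating $\theta^\star_i$ itself comes from the spectrum of $X_i^TX_i(X^TX)^{-1}_{ii}$ rather than of $(X^TX)^{-1}$ alone.
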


A key quantity in this theorem is the matrix $X_i^T X_i (X^T
X)^{-1}_{ii} \in \mathbb{R}^{n_i \times n_i}$; $(X^T X)^{-1}_{ii}$
denotes the $i,i$ block of the full inverse $(X^TX)^{-1}$ (i.e., first
inverting the joint covariance matrix of all the features, and then
taking the $i,i$ block), and this term provides a measure of the
linear independence between features corresponding to \emph{different}
signals.   To see this, note that if features across different signals
are orthogonal, $X^T X$ is block diagonal, and thus $X_i^T X_i (X^T
X)^{-1}_{ii} = X_i^T X_i (X_i^T X_i)^{-1} = I$, so $\rho_i = 1$.
Alternatively, if two features provided for different signals
are highly correlated, entries of $(X^TX)_{ii}^{-1}$ will have large
magnitude that is not canceled by $X_i^T X_i$ and $\rho_i$ will be large.  This
formalizes an intuitive notion for
contextually supervised source separation: for recovering the
underlying signals, it does not matter if two features for the
\emph{same} signal are highly correlated (this contrasts to the case
of recovering $\theta^\star$ itself which depends on all correlations), but two
correlated signals for different features make estimation difficult;
intuitively, if two very similar features are provided for two different source
signals, attribution becomes difficult. A particularly useful property of these
bounds  is that all terms can be computed using just $X_i$, so we can estimate
recovery rates when choosing our design matrix.

\subsection{Proofs}

The proof of Theorem \ref{theorem-bounds} proceeds in two steps.
First, using rules for linear transformations of Gaussian random
variables, we show that the quantity $X_i(\hat{\theta}_i -
\theta^\star_i)$ is also (zero-mean) Gaussian, which immediately leads
to \eqref{eq-expected-value}.  Second, we derive a tail bound on the
probability that $X_i(\hat{\theta}_i - \theta^\star_i)$ exceeds some
threshold, which leads to the sample complexity bound
\eqref{eq-sample-complexity}; because this quantity has a singular
covariance matrix, this requires a slightly specialized probability
bound, given by the following lemma.

\begin{lemma}
\label{lemma-tail}
Suppose $x \in \mathbb{R}^p \sim \mathcal{N}(0,\Sigma)$ with
$\mathrm{rank}(\Sigma) = n$.  Then
\begin{equation}
P\left(\|x\|_2^2 \geq t\right) \leq \left (\frac{t}{n \lambda} \right )^{n/2}
\exp\left \{-\frac{1}{2}(t/\lambda - n) \right \}
\label{eq-syn}
\end{equation}
where $\lambda$ is the largest eigenvalue of $\Sigma$.
\end{lemma}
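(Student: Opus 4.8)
The plan is to reduce the quadratic form $\|x\|_2^2$ to a chi-squared random variable and then apply a Chernoff bound. First I would diagonalize: writing the eigendecomposition $\Sigma = U \operatorname{diag}(\lambda_1,\dots,\lambda_n,0,\dots,0) U^T$ with $\lambda_1 \geq \cdots \geq \lambda_n > 0$ and $\lambda = \lambda_1$, the rotated vector $z = U^T x$ has independent coordinates with $z_j \sim \mathcal{N}(0,\lambda_j)$ for $j \leq n$ and $z_j = 0$ almost surely for $j > n$. Then $\|x\|_2^2 = \|z\|_2^2 = \sum_{j=1}^n \lambda_j g_j^2$ where $g_j = z_j/\sqrt{\lambda_j}$ are i.i.d.\ standard normals, and since every coefficient satisfies $\lambda_j \leq \lambda$ we have the pointwise bound $\|x\|_2^2 \leq \lambda \sum_{j=1}^n g_j^2 = \lambda Q$ with $Q \sim \chi^2_n$. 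Hence $P(\|x\|_2^2 \geq t) \leq P(Q \geq t/\lambda)$.

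Next I would apply the standard Chernoff argument to $Q$. Using the moment generating function $\mathbf{E}[e^{sQ}] = (1-2s)^{-n/2}$, valid for $s < 1/2$, Markov's inequality gives $P(Q \geq u) \leq \exp\{-su - \tfrac{n}{2}\log(1-2s)\}$ for every $0 < s < 1/2$. Minimizing the exponent over $s$ is a one-variable calculus problem whose solution is $s^\star = \tfrac12(1 - n/u)$, which lies in $(0,1/2)$ precisely when $u > n$; substituting back yields $P(Q \geq u) \leq (u/n)^{n/2} e^{-(u-n)/2}$. Setting $u = t/\lambda$ and chaining with the previous step gives exactly \eqref{eq-syn}. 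When $t \leq n\lambda$, the claimed right-hand side is at least $1$ (consistent with the $u > n$ restriction failing), so the bound holds trivially in that regime.

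I do not anticipate a serious obstacle here; the only points requiring care are (i) justifying the reduction to an $n$-dimensional chi-squared despite $\Sigma$ being singular, which the eigendecomposition handles cleanly since the zero eigenvalues contribute nothing to $\|x\|_2^2$, and (ii) checking feasibility of the optimal Chernoff parameter and noting the bound is vacuous when $t \leq n\lambda$. The MGF computation and the optimization over $s$ are routine. The same skeleton would also yield the sub-Gaussian extension mentioned earlier, replacing the exact chi-squared MGF with a sub-exponential tail estimate for $\sum_j g_j^2$.
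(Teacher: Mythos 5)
Your argument is, at its core, the same as the paper's: a Chernoff bound on $\|x\|_2^2$ in which every eigenvalue of $\Sigma$ is replaced by the largest one, followed by optimization of the Chernoff parameter (your $s^\star = \tfrac12(1-n/u)$ with $u = t/\lambda$ is exactly the paper's $\alpha = \tfrac{t-n\lambda}{2t\lambda}$). The only real difference is how the singular covariance is handled: you diagonalize first and use the pointwise domination $\|x\|_2^2 = \sum_{j\le n}\lambda_j g_j^2 \le \lambda Q$ with $Q\sim\chi^2_n$, whereas the paper computes $\mathbf{E}[e^{\alpha\|x\|_2^2}] = |I-2\alpha\Sigma|^{-1/2}$ by a Gaussian integral against $\Sigma+\epsilon I$ and then bounds the determinant by $(1-2\alpha\lambda)^n$. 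Your route is cleaner and avoids the $\epsilon$-limit, but the two are mathematically equivalent.

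There is, however, one genuine error in your final paragraph: the claim that the right-hand side of \eqref{eq-syn} is at least $1$ when $t \le n\lambda$ is backwards. Writing $v = t/(n\lambda)$, the right-hand side equals $\exp\{\tfrac{n}{2}(\log v - v + 1)\} \le 1$ for \emph{all} $v>0$ (since $\log v \le v-1$), with equality only at $t = n\lambda$. So the bound does not hold trivially in that regime --- and indeed the lemma as stated is false for small $t$: with $p=n=1$, $\Sigma = 1$, $t=0.01$, the left side is $P(|x|\ge 0.1)\approx 0.92$ while the right side is $0.1\,e^{0.495}\approx 0.16$. The correct resolution is that the derivation (yours and the paper's) is valid only for $t \ge n\lambda$, where the optimal Chernoff parameter is feasible; the paper never addresses this either, but its downstream use is safe because setting the right-hand side to $\delta \le 0.1 < 1$ forces the solution onto the decreasing branch $t > n\lambda$. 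You should state the restriction $t \ge n\lambda$ rather than assert the bound is vacuously true below it.
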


\begin{proof}
By Chernoff's bound
\begin{equation}
P\left(\|x\|_2^2 \geq t\right) \leq
\frac{\mathbf{E}\left[e^{\alpha \|x\|_2^2}\right ]}{e^{\alpha t}}.
\end{equation}
for any $\alpha \geq 0$.  For any $\epsilon > 0$, $z \sim
\mathcal{N}(0, \Sigma + \epsilon I)$,
\begin{equation}
\begin{split}
\mathbf{E}\left[e^{\alpha \|z\|_2^2}\right] & = (2\pi)^{-p/2}|\Sigma + \epsilon
  I|^{-1/2}  \int \exp\left \{ - \frac{1}{2} z^T (\Sigma + \epsilon I)^{-1}
  z + \alpha z^T z \right \} dz \\
& = (2\pi)^{-p/2}|\Sigma + \epsilon
  I|^{-1/2}  \int \exp\left \{ - \frac{1}{2} z^T (\Sigma + \epsilon
  I)^{-1}(I - 2 \alpha (\Sigma + \epsilon I)^{-1})
  z \right \} dz \\
& = (2\pi)^{-p/2}|\Sigma + \epsilon I|^{-1/2} (2\pi)^{p/2}|\Sigma +
  \epsilon I|^{1/2} |I - 2 \alpha (\Sigma + \epsilon I)|^{-1/2}
\end{split}
\end{equation}
so taking the limit $\epsilon \rightarrow 0$, we have that
$\mathbf{E}[e^{\alpha \|x\|_2^2}] = |I - 2 \alpha
\Sigma|^{-1/2}$.  Since $\Sigma$ has only $n$
nonzero eigenvalues,
\begin{equation}
|I - 2 \alpha \Sigma| = \prod_{i=1}^n (1 - 2 \alpha \lambda_i)
\geq (1 - 2 \alpha \lambda)^n
\end{equation}
and so
\begin{equation}
P\left(\|x\|_2^2 \geq t\right) \leq \frac{1}{(1 - 2 \alpha
  \lambda)^{n/2}e^{\alpha t}}.
\end{equation}
Minimizing this expression over $\alpha$ gives $\alpha = \frac{t - n
  \lambda}{2t\lambda}$ and substituting this into the equation
above gives the desired bound.
\end{proof}

To write the problem more compactly, we define the block matrix $W \in
\mathbb{R}^{T \times k}$ with columns $w_i$, and define the
``block-diagonalization'' operator $B : \mathbb{R}^{n} \rightarrow
\mathbb{R}^{n \times k}$ as
\begin{equation}
B(\theta) = \left [ \begin{array}{cccc}
\theta_1 & 0 & \cdots & 0 \\
0 & \theta_2 & \cdots & 0 \\
\vdots & \vdots & \ddots & \vdots \\
0 & 0 & \cdots & \theta_k \end{array} \right ].
\end{equation}

\begin{proof}
(of Theorem \ref{theorem-bounds}) Since $Y = X B(\theta^\star) + W$,
\begin{equation}
\begin{split}
X B(\hat{\theta}) & = X B\left((X^T X)^{-1} X^T (X B(\theta^\star) +
W)1 \right) \\ & = X B(B(\theta^\star)) 1 + X B\left((X^T X)^{-1} X^T W 1
\right) \\ & = X B(\theta^\star) + X B\left((X^T X)^{-1} X^T W 1
\right)
\end{split}
\end{equation}
For simplicity of notation we also denote
\begin{equation}
u \in \mathbb{R}^{n} \equiv (X^T X)^{-1} X^T W 1.
\end{equation}
Thus
\begin{equation}
X B(\theta^\star) - X B(\hat{\theta})  = X B(u).
\end{equation}
Now, using rules for Gaussian random variables under linear transformations
$W 1 \sim \mathcal{N}(0, \sigma^2 I_T)$ and so $u \sim \mathcal{N}(0,
\sigma^2(X^T X)^{-1})$.  Finally, partioning $u_1, u_2, \ldots,
u_k$ conformally with $\theta$,
\begin{equation}
X_i \theta^\star - X_i \hat{\theta}_i = X_i u_i \sim \mathcal{N}(0,
\sigma^2 X_i(X^T X)^{-1}_{ii} X_i^T)
\end{equation}
so
\begin{equation}
\mathbf{E} \left [\left \|X_i \theta^\star - X_i \hat{\theta}_i \right
  \|_2^2 \right ] = \sigma^2  \tr X_i^T X_i(X^T X)^{-1}_{ii}.
\end{equation}

Since $\sigma^2 X_i(X^T X)^{-1}_{ii}X_i^T$ is a rank $n_i$ matrix
with maximum eigenvalue equal to $\sigma^2 \rho_i$, applying
Lemma \ref{lemma-tail} above gives
\begin{equation}
P\left ( \mathrm{RMSE}(X_i\hat{\theta}_i) \geq \epsilon \right) =
P\left ( \|X_iu_i\|^2_2 \geq T \epsilon^2 \right) \leq \left
  (\frac{T \epsilon^2}{n_i \sigma^2 \rho_i} \right )^{n_i/2} \exp\left
  \{-\frac{1}{2}\left (\frac{T \epsilon^2}{\sigma^2 \rho_i} -
  n_i\right) \right \}.
\end{equation}
Setting the right hand side equal to $\delta$ and solving for
$\epsilon$ gives
\begin{equation}
\epsilon = \sqrt{\frac{-W(-\delta^{2/n}/e) n_i \rho_i \sigma^2}{T}}
\end{equation}
where $W$ denotes the Lambert $W$ function (the inverse of $f(x) = x
e^x$).  The theorem follows by noting that $-W(-\delta^{2/n}/e)
\leq 4 \log \frac{1}{\delta}$ for all $n \geq 1$ when $\delta \leq
0.1$, with both quantities always positive in this range (note that
leaving the $W$ term in the bound can be substantially tighter in some
cases).
\end{proof}

\section{Experimental results}

\begin{figure}
\includegraphics{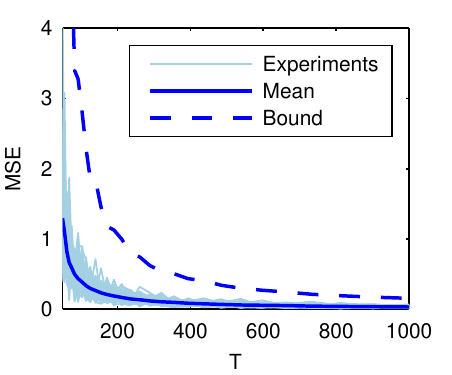}
\includegraphics{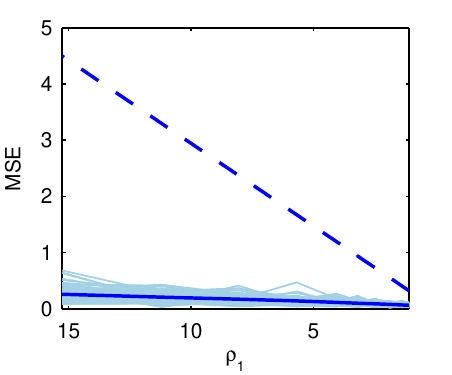}
\includegraphics{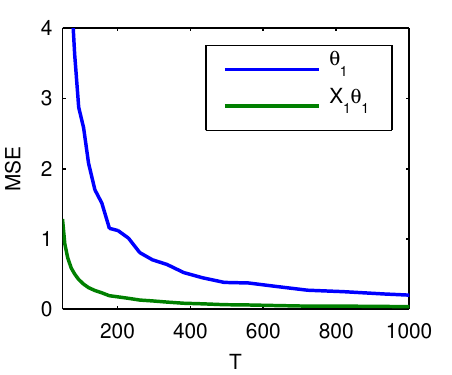}
\caption{Comparing 100 random experiments with the theoretical
  mean and 90\% upper bound for increasing $T$ (left); fixing $T = 500$ and
  varying $\rho_i$ (center); and compared to the recovery of
  $\theta_1$ directly (right).}
\label{fig-syn1}
\end{figure}

In this section we evaluate contextual supervision on synthetic
data and apply in to disaggregate smart meter data collected from thousands of
homes. Since labeled data is unavailable, we design a synthetic
disaggregation task similar to energy disaggregation from smart meters in order
to evaluate our performance on this task quantitatively. Here we explore the
choice of loss functions and demonstrate that contextual supervision
dramatically outperforms the unsupervised approach.

\textbf{Rates for signal recovery}. We begin with a set of experiments examining
the ability of the model to recover the
underlying source signals as predicted by our theoretical analysis. In these
experiments, we consider the problem of separating two source signals with $X_i
\in \mathbb{R}^{T \times 16}$
sampled independently from a zero-mean Normal with covariance $I + (1-\mu)11^T$; we sample $\theta_i^\star$ uniformly from $[-1,1]$ and $Y \sim
\mathcal{N}(X\theta^\star, I)$. Setting $\mu = 0.01$ causes the
features for each signal to be highly correlated with each other but since $X_i$
are sampled independently, not highly correlated across signals. In Figure \ref{fig-syn1},
we see that MSE vanishes as $T$ grows; when comparing these experimental results to
the theoretical mean and 90\% upper bound, we see that at least in these
experiments, the bound is somewhat loose for large values of $\rho_i$. We are
also able to recover $\theta^\star$ (which is expected since the least-squares
estimator $\hat \theta$ is consistent), but the rate is much slower due to
high correlations in $X_i$.

\textbf{Disaggregation of synthetic data}. The next set of experiments considers
a synthetic generation process that more
closely mimics signals that we encounter in energy disaggregation. The process
described visually in Figure
\ref{fig-syn2} (top) begins with two signals, the first is smoothly varying over
time while the other is a repeating step function
\begin{equation}
X_1(t) = \sin(2\pi t / \tau_1) + 1,  \quad X_2(t) = I(t \bmod \tau_2 <
\tau_2/2)
\end{equation}
where $I(\cdot)$ is the indicator function and $\tau_1$, $\tau_2$ are the
period of each signal. We also use
two different noise models: for the smooth signal we sample Gaussian noise from
$\mathcal{N}(0, \sigma^2)$ while for the step function,
we sample a distribution with a point mass at zero, uniform probability
over $[1, 0) \cup (0, 1]$ and correlate it across time by summing over a
window of size $\beta$. Finally, we constrain both noisy signals to be
nonnegative and sum them to generate our input.

\begin{figure}
\includegraphics{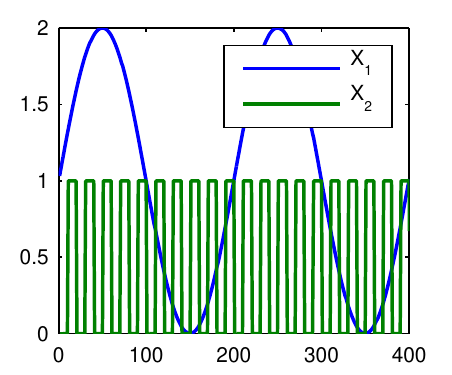}
\includegraphics{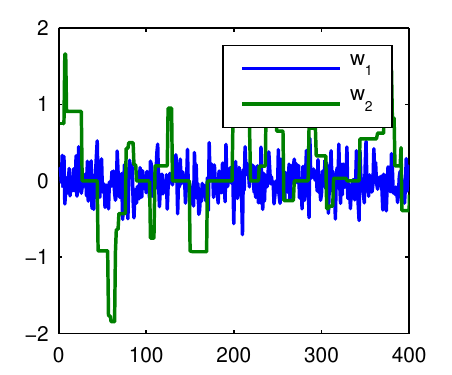}
\includegraphics{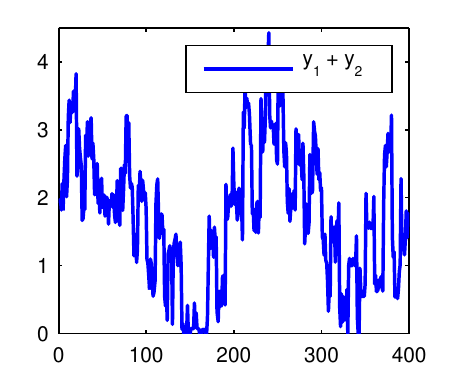}
\includegraphics{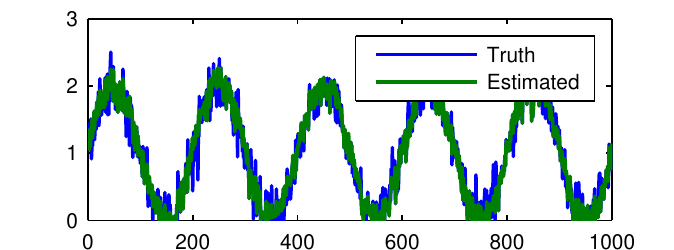}
\includegraphics{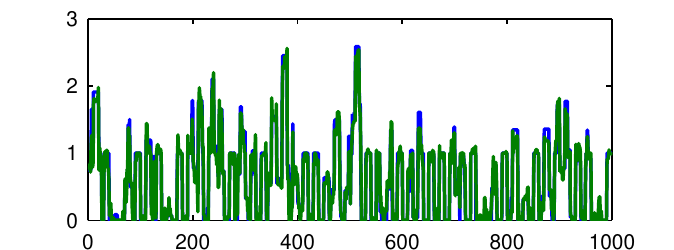}
\caption{(top) Synthetic data generation process starting with two underlying
  signals (left), corrupted by different noise models (center), and constrained
  to be nonnegative
  and sum'd to give the observed input (right); (bottom) comparison of the true noisy
  signal and estimates.}
\label{fig-syn2}
\end{figure}

We generate data under this model for $T = 50000$ time points and consider
increasingly specialized optimization objectives while measuring the error in
recovering $Y^\star = XD(\theta^\star) + W$, the underlying source signals
corrupted by noise. As can be seen in Table \ref{tab-syn2-mse}, by using $\ell_1$
loss for $y_2$ and adding $g_i(y_i)$ terms
penalizing $\|Dy_1\|_2^2$ and $\|Dy_2\|_1$, error decreases by 25\% over
just $\ell_2$ loss alone; in Figure \ref{fig-syn2}, we observe that our
estimations recovers the true source signals closely with the $g_i$ terms
helping to capture the dynamics of the noise model for $w_2$.

As a baseline for this result, we compare to an unsupervised method, nonnegative
sparse coding \cite{hoyer2002non}. We apply sparse coding
by segmenting the input signal into $1000$ examples of $50$ time points
(1/4 the period of the sine wave, $X_1(t)$) and fit a sparse
model of 200 basis functions. We report the best possible source separation by
assigning each basis function according to an oracle measuring
correlation with the true source signal and using the best value over a grid of
hyperparameters; however, performance
is still significantly worse than the contextually supervised method which makes
explicit use of additional information.
\begin{table}
\label{tab-syn2-mse}
\centering
\caption{Comparison of models for source separation on synthetic data.}
\begin{tabular}{|l|r|}
\hline
\textbf{Model} & \textbf{RMSE} \\
\hline
Nonnegative sparse coding & 0.4035 \\
$\ell_2$ loss for $y_1$ & 0.1640 \\
$\ell_2$ loss for $y_1$, $\ell_1$ loss for $y_2$ & 0.1520 \\
$\ell_2$ loss for $y_1$, $\ell_1$ loss for $y_2$ and $g_i$ penalizing $\|Dy_i\|$  & \textbf{0.1217} \\
\hline
\end{tabular}
\end{table}

\textbf{Energy disaggregation on smart meter data}. Next, we turn to the motivating problem for our model: disaggregating
large-scale low resolution smart meter data into its component sources of
consumption. Our dataset comes from PG\&E and was collected by the utility from
customers in Northern California who had smart meters between 1/2/2008
and 12/31/2011. According to estimations based on survey data, heating and cooling (air
conditioning and refrigerators) comprise over 39\% of total consumer
electricity usage \cite{recs} and thus are dominant uses for
consumers. Clearly, we expect temperature to have a strong correlation with
these uses and thus we provide contextual supervision in the form of
temperature information. The PG\&E data is anonymized, but the
location of individual customers is identified at the census block
level; we use this information to construct a parallel temperature
dataset using data from Weather Underground
(\url{http://www.wunderground.com/}).

The exact specification of our energy disaggregation model is given in Table
\ref{tab-dis-features}---we capture the non-linear dependence on temperature
with radial-basis functions (RBFs), include a ``Base'' category which models
energy used as a function of time of day, and featureless ``Other'' category
representing end-uses not explicitly modeled. For simplicity, we penalize each
category's deviations from the model using $\ell_1$ loss; but for heating and
cooling we first multiply by a smoothing matrix $S_n$ ($1$'s on
the diagonal and $n$ super diagonals) capturing the thermal mass
inherent in heating and cooling: we expect energy usage to correlate
with temperature over a window of time, not immediately. Finally, we use $g_i(y_i)$ and the
difference operator to encode our intuition of how energy consumption in each
category evolves over time. The ``Base'' category represents an aggregation of
many sources of consumption and which we expect to evolve smoothly over time,
while the on/off behavior in other categories is best represented by the
$\ell_1$ penalty.

We present the result of our model at two time scales, starting with Figure 3
(top), where we show aggregate energy consumption across all homes at the
week level to demonstrate basic trends in usage. Quantitatively, our model
assigns 15.6\% of energy consumption to ``Cooling'' and 7.7\% to
``Heating'', which is reasonably close to estimations based on survey
data \cite{recs} (10.4\% for air conditioning and 5.4\% for space heating). We
have deliberately kept the model simple and thus our higher estimations are
likely due to conflating other temperature-related energy usages, such as refrigerators and
water heating. In Figure \ref{fig-dis} (bottom), we present the
results of the model in disaggregating energy usage for a single hot summer
week where the majority of energy usage is estimated to be cooling due
to the context provided by high temperature.

\begin{table}
\label{tab-dis-features}
\centering
\caption{Model specification for contextually supervised energy disaggregation.}
\begin{tabular}{|l|l|l|l|}
\hline
\textbf{Category} & \textbf{Features} & \textbf{$\ell_i$} & \textbf{$g_i$} \\
\hline
Base    & Hour of day & $\|y_1 - X_1\theta_1 \|_1 $ & $\|Dy_1 \|_2^2$   \\
Cooling & RBFs over temperatures $>70^\circ\mathrm{F}$ & $\|S_2(y_2 - X_2\theta_2) \|_1$ &
$0.1 \times \| Dy_2\|_1$ \\
Heating & RBFs over temperatures $<50^\circ\mathrm{F}$ & $\|S_2(y_3 - X_3\theta_3) \|_1$ &
$0.1 \times \| Dy_3 \|_1$ \\
Other   & None & $\| y_4 \|_1 $ & $0.05 \times \| Dy_4 \|_1$ \\
\hline
\end{tabular}
\end{table}

\begin{figure}
\includegraphics{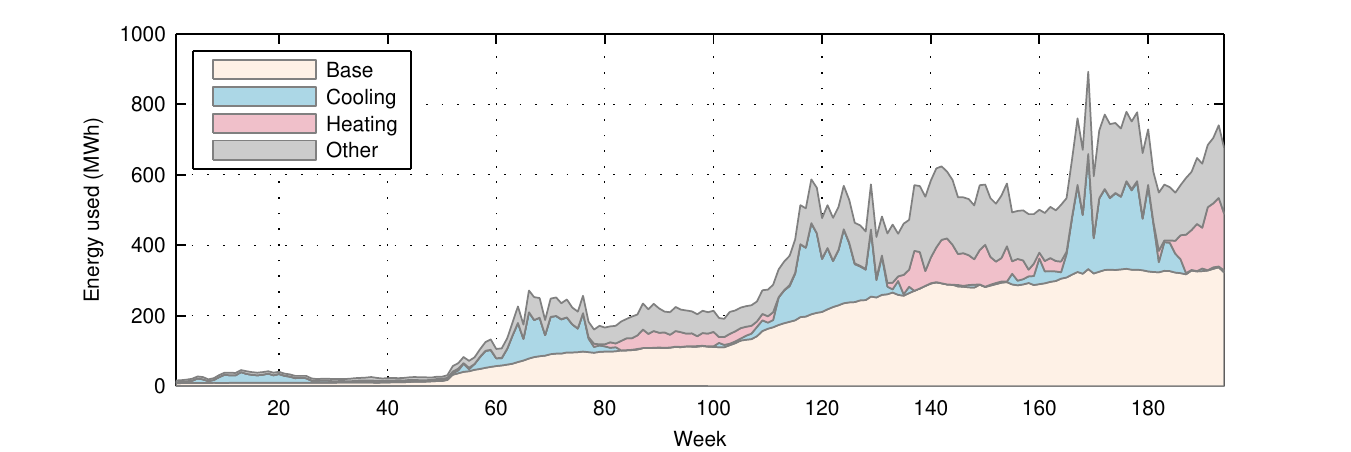}
\includegraphics{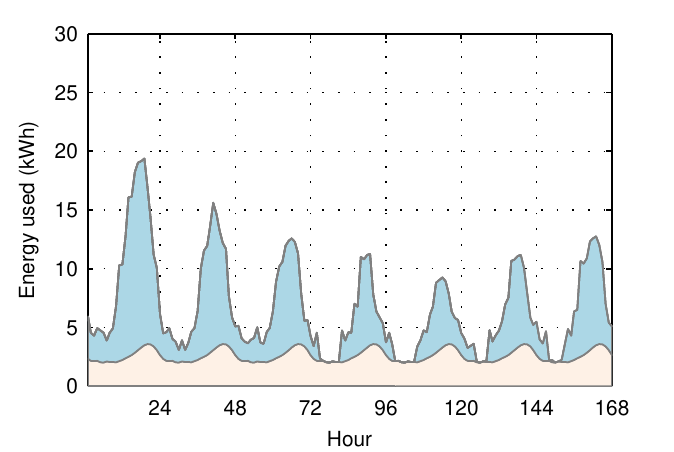}
\includegraphics{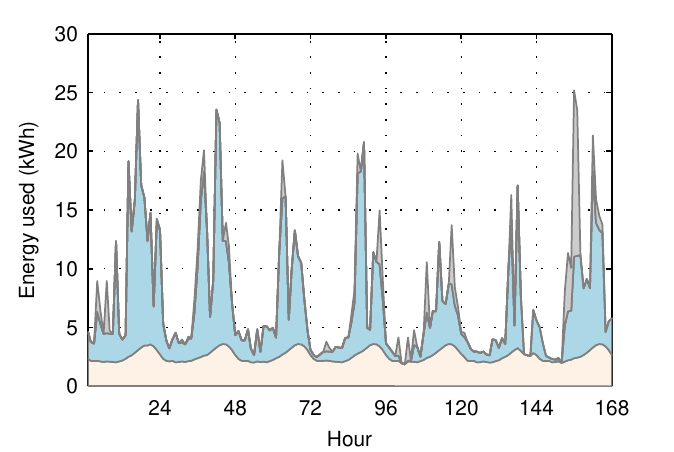}
\caption{Disaggregated energy usage for 1276 homes over nearly four years shown weekly
  (top); and for a single home near Fresno, California over the week starting 6/28/2010 (bottom) with
  estimated correlations $X_i\hat{\theta}_i$ (left) and estimated energy uses $\hat{y}_i$ (right).}
\label{fig-dis}
\end{figure}

\section{Conclusion and discussion}

We believe the advances in this work, formalizing contextually supervised source
separation and theoretically analyzing the requirements for accurate source
signal recovery, will enable new applications of single-channel source separation
in domains with large amounts of data but no access to explicit supervision. In
energy disaggregation, this approach has allowed us to reasonably separate
sources of consumption from extremely low-frequency smart meter data. This a
significant advancement with the potential to drive increases in energy
efficiency through programs that expose this information to consumers and
automated systems for demand response. Developing algorithms that use this
information to achieve these goals is an interesting direction for
future work.

Another interesting direction is the explicit connection of our large-scale
low-resolution methods with the more sophisticated appliance models developed on
smaller supervised datasets with high-frequency measurements. As a few examples, a
small amount of supervised information would enable us to calibrate
our models automatically, while a semi-supervised approach would enable
spreading some of the benefit of high-resolution load monitoring to the vast
number of homes where only smart meter data is available.

\bibliographystyle{abbrv}
\bibliography{context}

\begin{thebibliography}{10}

\bibitem{recs}
2009 {RECS} survey data.
\newblock Available at
  \url{http://www.eia.gov/consumption/residential/data/2009/}.

\bibitem{bell1995information}
A.~J. Bell and T.~J. Sejnowski.
\newblock An information-maximization approach to blind separation and blind
  deconvolution.
\newblock {\em Neural computation}, 7(6):1129--1159, 1995.

\bibitem{blumensath2005shift}
T.~Blumensath and M.~Davies.
\newblock Shift-invariant sparse coding for single channel blind source
  separation.
\newblock {\em SPARS}, 5:75--78, 2005.

\bibitem{comon1994independent}
P.~Comon.
\newblock Independent component analysis, a new concept?
\newblock {\em Signal processing}, 36(3):287--314, 1994.

\bibitem{darby.06}
S.~Darby.
\newblock The effectiveness of feedback on energy consumption.
\newblock Technical report, Environmental Change Institute, University of
  Oxford, 2006.

\bibitem{davies2007source}
M.~Davies and C.~James.
\newblock Source separation using single channel ica.
\newblock {\em Signal Processing}, 87(8):1819--1832, 2007.

\bibitem{ehrhardt2010advanced}
K.~Ehrhardt-Martinez, K.~A. Donnelly, and S.~Laitner.
\newblock Advanced metering initiatives and residential feedback programs: a
  meta-review for household electricity-saving opportunities.
\newblock In {\em American Council for an Energy-Efficient Economy}, 2010.

\bibitem{ghahramani1997factorial}
Z.~Ghahramani and M.~I. Jordan.
\newblock Factorial hidden markov models.
\newblock {\em Machine learning}, 29(2-3):245--273, 1997.

\bibitem{hart1992nonintrusive}
G.~W. Hart.
\newblock Nonintrusive appliance load monitoring.
\newblock {\em Proceedings of the IEEE}, 80(12):1870--1891, 1992.

\bibitem{hoyer2002non}
P.~O. Hoyer.
\newblock Non-negative sparse coding.
\newblock In {\em Neural Networks for Signal Processing, 2002. Proceedings of
  the 2002 12th IEEE Workshop on}, pages 557--565. IEEE, 2002.

\bibitem{kolter2010energy}
J.~Z. Kolter, S.~Batra, and A.~Y. Ng.
\newblock Energy disaggregation via discriminative sparse coding.
\newblock In {\em Neural Information Processing Systems}, pages 1153--1161,
  2010.

\bibitem{kolter2012approximate}
J.~Z. Kolter and T.~Jaakkola.
\newblock Approximate inference in additive factorial hmms with application to
  energy disaggregation.
\newblock In {\em International Conference on Artificial Intelligence and
  Statistics}, pages 1472--1482, 2012.

\bibitem{lewicki2000learning}
M.~S. Lewicki and T.~J. Sejnowski.
\newblock Learning overcomplete representations.
\newblock {\em Neural computation}, 12(2):337--365, 2000.

\bibitem{neenan.09}
B.~Neenan and J.~Robinson.
\newblock Residential electricity use feedback: A research synthesis and
  economic framework.
\newblock Technical report, Electric Power Research Institute, 2009.

\bibitem{olshausen1997sparse}
B.~A. Olshausen, D.~J. Field, et~al.
\newblock Sparse coding with an overcomplete basis set: A strategy employed by
  vi?
\newblock {\em Vision research}, 37(23):3311--3326, 1997.

\bibitem{parson2012non}
O.~Parson, S.~Ghosh, M.~Weal, and A.~Rogers.
\newblock Non-intrusive load monitoring using prior models of general appliance
  types.
\newblock In {\em 26th AAAI Conference on Artificial Intelligence}, 2012.

\bibitem{roweis2001one}
S.~T. Roweis.
\newblock One microphone source separation.
\newblock {\em Advances in neural information processing systems}, pages
  793--799, 2001.

\bibitem{schmidt2006single}
M.~Schmidt and R.~Olsson.
\newblock Single-channel speech separation using sparse non-negative matrix
  factorization.
\newblock 2006.

\bibitem{schmidt2006nonnegative}
M.~N. Schmidt and M.~M{\o}rup.
\newblock Nonnegative matrix factor 2-d deconvolution for blind single channel
  source separation.
\newblock In {\em Independent Component Analysis and Blind Signal Separation},
  pages 700--707. Springer, 2006.

\bibitem{smaragdis2006probabilistic}
P.~Smaragdis, B.~Raj, and M.~Shashanka.
\newblock A probabilistic latent variable model for acoustic modeling.
\newblock {\em Advances in models for acoustic processing, NIPS}, 148, 2006.

\bibitem{iee.12}
{Various}.
\newblock Utility-scale smart meter deployments, plans, and proposals.
\newblock Technical report, Institute for Electric Efficiency, 2012.

\bibitem{zeifman.11}
M.~Ziefman and K.~Roth.
\newblock Nonintrusive appliance load monitoring: Review and outlook.
\newblock {\em {IEEE} Transactions on Consumer Electronics}, 57(1):76--84,
  2011.

\end{thebibliography}

\end{document}